\documentclass{article}
\usepackage{arxiv}
\usepackage{booktabs}
\usepackage{natbib}
\usepackage{tabularx}
\usepackage{amsthm}
\usepackage{amssymb}

\usepackage{multirow}
\usepackage{amsmath}
\usepackage{graphicx}
\usepackage{float}   % for [H] placement specifier
\usepackage[ruled,vlined]{algorithm2e}

\theoremstyle{plain}
\newtheorem{proposition}{Proposition}[section]

\theoremstyle{definition}
\newtheorem{definition}{Definition}[section]

\newcommand{\sg}{\operatorname{sg}}

\title{When Does Margin Clamping Affect Training Variance?\\
Dataset-Dependent Effects in Contrastive Forward-Forward Learning}
\author{
  Joshua Steier\thanks{Correspondence: joshsteier@gmail.com}\\
  Independent Researcher
}
\date{}

\begin{document}
\maketitle

% ============================================================================
% ABSTRACT
% ============================================================================
\begin{abstract}
Contrastive Forward-Forward (CFF) learning trains Vision Transformers
layer by layer against supervised contrastive objectives. CFF training
can be sensitive to random seed, but the sources of this instability
are poorly understood. We focus on one implementation detail: the
positive-pair margin in the contrastive loss is applied through
saturating similarity clamping, $\min(s + m,\, 1)$. We prove that an
alternative formulation, subtracting the margin after the
log-probability, is gradient-neutral under the mean-over-positives
reduction. On CIFAR-10 ($2 \times 2$ factorial, $n{=}7$ seeds per
cell), clamping produces $5.90\times$ higher pooled test-accuracy
variance ($p{=}0.003$) with no difference in mean accuracy. Analyses
of clamp activation rates, layerwise gradient norms, and a
reduced-margin probe point to saturation-driven gradient truncation at
early layers. The effect does not transfer cleanly to other datasets:
on CIFAR-100, SVHN, and Fashion-MNIST, clamping produces equal or
lower variance. Two factors account for the discrepancy. First,
positive-pair density per batch controls how often saturation occurs.
Second, task difficulty compresses seed-to-seed spread when accuracy is
high. An SVHN difficulty sweep confirms the interaction on a single
dataset, with the variance ratio moving from $0.25\times$ at high
accuracy to $16.73\times$ under aggressive augmentation. In
moderate-accuracy regimes with many same-class pairs per batch,
switching to the gradient-neutral subtraction reference removes this
variance inflation at no cost to mean accuracy. Measuring the layer-0
clamp activation rate serves as a simple check for whether the problem
applies.
\end{abstract}

% ============================================================================
% 1. INTRODUCTION
% ============================================================================
\section{Introduction}
\label{sec:intro}

How many seeds do you need to trust a result? In standard
backpropagation-trained models, this question has drawn sustained
interest~\citep{dodge2020finetuning, bhojanapalli2021robustness,
picard2021seed}, but for layer-local training methods the answer
remains open. This paper asks whether common implementation choices
inside the loss function can systematically shift seed-to-seed
variance in one such method.

Forward-Forward (FF) learning~\citep{hinton2022forward} replaces
end-to-end backpropagation with layer-local objectives. Contrastive
Forward-Forward (CFF) extends this idea to Vision Transformers by
training each layer against a supervised contrastive
loss~\citep{aghagolzadeh2025cff, chen2025scff}. Because layers
optimize independently, CFF has qualitatively different optimization
dynamics from standard training. Prior work has focused on mean
accuracy and treated the loss formulation as given. Whether
implementation details inside that loss affect training variance has
not been tested.

\paragraph{Focus.}
We examine one such detail: how the positive-pair margin is applied.
The default in existing CFF codebases uses saturating similarity
clamping, $\min(s + m,\, 1)$. We compare this against a
post-log-probability subtraction alternative and prove that the
subtraction form is gradient-neutral under the mean-over-positives
reduction (Proposition~\ref{prop:logprob_margin_constant}), so it
serves as a true no-margin baseline. The comparison therefore
separates the effect of saturation from the effect of the margin
itself.

\paragraph{Main results.}
On CIFAR-10 ($2\times 2$ factorial, $n{=}7$ seeds per cell),
clamping produces $5.90\times$ higher variance in test accuracy
($p{=}0.003$) with no detectable mean difference. Gradient norms
and clamp activation rates point to truncation at early layers
where saturation is most frequent. The effect is
dataset-dependent: CIFAR-100, SVHN, and Fashion-MNIST all show
inverted variance ratios. Cross-dataset analysis and an SVHN
difficulty sweep identify two moderating factors: positive-pair
density (which governs how often saturation occurs) and task
difficulty (which limits how far seeds can diverge).

\paragraph{Contributions.}
We give a closed-form specification of the CFF supervised
contrastive loss with explicit margin variants
(Section~\ref{sec:methods}), including a proof of
gradient-neutrality for the subtraction form. We then present
an empirical variance audit on CIFAR-10 with supporting
diagnostics (Sections~\ref{sec:experiments}
and~\ref{sec:mechanism}), and map out when the effect does
and does not appear across four datasets
(Section~\ref{sec:generalization}).

% ============================================================================
% 2. BACKGROUND / RELATED WORK
% ============================================================================
\section{Background and Related Work}
\label{sec:background}

\subsection{Forward-Forward and layer-local learning}

Forward-Forward (FF) learning~\citep{hinton2022forward} trains networks
using local objectives rather than a single global loss. It belongs to a
broader family of greedy layerwise methods that optimize each layer (or
block) against a local target~\citep{belilovsky2019greedy,
nokland2019training, lowe2019putting}. CFF extends FF to Vision
Transformers by applying a supervised contrastive objective at each
layer~\citep{aghagolzadeh2025cff, chen2025scff}. Greedy layerwise
methods have been studied for their representational properties and
scalability, but their seed-to-seed stability has seen comparatively
little scrutiny.

\subsection{Contrastive and supervised contrastive learning}

Self-supervised contrastive objectives learn representations by pulling
positive pairs together and pushing negatives apart in a
temperature-scaled softmax~\citep{oord2018infonce, chen2020simclr,
he2020moco}. \citet{khosla2020supcon} extended this to the supervised
setting, where all same-class pairs are treated as positives. The loss
used in CFF follows this supervised formulation with the ``mean over
positives'' reduction. Variants of the supervised contrastive loss
differ in their reduction over positive pairs and in how margins or
similarity modifications enter the
objective~\citep{graf2021dissecting}; our work examines one such
modification.

\subsection{Margins in metric and contrastive learning}

Margins are widely used in metric learning to enforce separation between
classes. Additive angular and cosine margins in face recognition
losses, including
SphereFace~\citep{liu2017sphereface},
CosFace~\citep{wang2018cosface}, and
ArcFace~\citep{deng2019arcface}, reshape logit contributions through
nonlinear operations that can create saturated regions with reduced or
truncated gradients. Our setting differs in two respects:
(i)~the objective is a supervised contrastive loss applied independently
at each layer, and (ii)~Forward-Forward training enforces layer-local
updates through stop-gradients. We document how a saturation-inducing
margin behaves in this layer-local regime, with seed-to-seed variance
rather than mean accuracy as the primary endpoint.

\subsection{Training variance and reproducibility}

Seed-to-seed variability in deep learning has been documented across
fine-tuning~\citep{dodge2020finetuning},
pre-training~\citep{bhojanapalli2021robustness}, and broader model
selection~\citep{damour2022underspecification}. \citet{picard2021seed}
showed that seed choice alone can meaningfully shift reported metrics.
Sources of variance in backpropagation-trained models (weight
initialization, data ordering, dropout masks) are well characterized,
but analogous analyses for layer-local training methods are largely
absent. Our work fills this gap for CFF.

\subsection{Gradient truncation in optimization}

Gradient clipping and truncation have been studied both as deliberate
regularizers~\citep{pascanu2013difficulty, zhang2020why} and as
unintended consequences of activation saturation. In our setting,
similarity clamping introduces a form of implicit gradient truncation
that is data-dependent and varies across layers, setting it apart from
explicit gradient clipping strategies.

\section{Problem Setup}
\label{sec:problem_setup}

Let $\sigma$ denote a random seed that jointly determines weight
initialization, data-loader ordering, and stochastic augmentation for a
CFF training run. Fix an experimental condition $c$ and let
$a_c(\sigma)$ denote the resulting test accuracy on a held-out set after
the full two-stage pipeline (Section~\ref{sec:exp_setup}).

\paragraph{Primary endpoint.}
Given $n$ independent seeds $\{\sigma_1, \dots, \sigma_n\}$, we measure
the sample variance
\begin{equation}
\widehat{\mathrm{Var}}_c
= \frac{1}{n-1} \sum_{k=1}^{n} \bigl(a_c(\sigma_k) - \bar{a}_c\bigr)^2,
\label{eq:sample_var}
\end{equation}
where $\bar{a}_c$ is the sample mean. We test the null hypothesis of
equal population variances between margin types:
\begin{equation}
H_0\colon \mathrm{Var}(\texttt{clamp})
= \mathrm{Var}(\texttt{subtract}).
\label{eq:null_hyp}
\end{equation}

\paragraph{Experimental factors.}
We cross two factors in a $2 \times 2$ factorial design:
(A)~margin type (\texttt{clamp} vs.\ \texttt{subtract};
Section~\ref{sec:margin_variants}), and (B)~numerical-stability mode
(\texttt{detach} vs.\ \texttt{direct};
Eq.~\eqref{eq:stability_modes}). All other pipeline components
(architecture, optimizer, learning rate, augmentation distribution,
and training duration) are held fixed across cells.

\paragraph{Diagnostic endpoints.}
To probe the mechanism behind any observed variance difference, we
additionally report: (i)~clamp activation rates (CAR), which quantify
how often margin-induced saturation occurs per layer; (ii)~layerwise
gradient $\ell_2$ norms; and (iii)~a reduced-margin dose-response probe
that tests whether lowering the margin schedule weakens the variance
effect.
% ============================================================================
\section{Methods}
\label{sec:methods}

\subsection{Notation and representations}
\label{sec:notation}

Table~\ref{tab:notation} summarizes the main symbols; all definitions
are scoped to a single layer $\ell$ unless otherwise stated.

\begin{table}[t]
\centering
\small
\setlength{\tabcolsep}{6pt}
\renewcommand{\arraystretch}{1.15}
\begin{tabularx}{\linewidth}{@{}p{3.0cm}X@{}}
\toprule
\textbf{Symbol} & \textbf{Definition} \\
\midrule
$\mathcal{D}$ &
Labeled dataset $\{(x_n,y_n)\}_{n=1}^{N}$ with
$y_n\in\{1,\dots,K\}$. \\

$B$ &
Minibatch size (number of images per iteration). \\

$x_i^{(1)}, x_i^{(2)}$ &
Two stochastic augmentations of image $x_i$. \\

$L$, $\ell$ &
Number of transformer layers; layer index
$\ell\in\{0,\dots,L{-}1\}$. \\

$z_{i,\ell}^{(v)} \in \mathbb{R}^d$ &
$\ell_2$-normalized mean-pooled representation of view $v$ at
layer $\ell$ (Eq.~\eqref{eq:representation}). \\

$u, v$ &
Indices over the concatenated view set $\{1,\dots,2B\}$
(Definition~\ref{def:concat_index}). \\

$s_{uv,\ell}$ &
Cosine similarity: $z_{u,\ell}^\top z_{v,\ell}\in[-1,1]$. \\

$\tau$ &
Temperature parameter ($\tau > 0$). \\

$m_\ell$ &
Nonnegative margin at layer $\ell$. \\

$M_{uv}$ &
Positive mask: $M_{uv}=1$ iff $y(u)=y(v)$ and $u\neq v$. \\

$\mathcal{P}_u$ &
Positive set: $\{v : M_{uv}=1\}$. \\
\bottomrule
\end{tabularx}
\caption{Core notation. Diagnostic metrics (CAR, gradient norms, VR)
are defined in Section~\ref{sec:metrics}.}
\label{tab:notation}
\end{table}

\paragraph{Representation.}
Let $h_{i,\ell}^{(v)} \in \mathbb{R}^{T \times d}$ denote the
token-level output of layer $\ell$ for example $i$, view $v$, where $T$
is the number of tokens. The contrastive representation is
\begin{equation}
z_{i,\ell}^{(v)}
= \frac{\bar{h}_{i,\ell}^{(v)}}
       {\|\bar{h}_{i,\ell}^{(v)}\|_2},
\qquad
\bar{h}_{i,\ell}^{(v)}
= \frac{1}{T}\sum_{t=1}^{T} h_{i,\ell,t}^{(v)},
\label{eq:representation}
\end{equation}
so $z_{i,\ell}^{(v)} \in \mathbb{R}^d$ lies on the unit sphere.

\begin{definition}[Concatenated view indexing]
\label{def:concat_index}
Given a minibatch of $B$ images with two views each, we form the matrix
$Z_\ell = [z_{1,\ell}^{(1)}, \dots, z_{B,\ell}^{(1)},
z_{1,\ell}^{(2)}, \dots, z_{B,\ell}^{(2)}]^\top
\in \mathbb{R}^{2B \times d}$
and index its rows by $u \in \{1,\dots,2B\}$, with class label $y(u)$
inherited from the underlying image.
\end{definition}

\begin{definition}[Stop-gradient]
\label{def:stop_grad}
For a scalar or tensor $x$, $\sg(x)$ denotes the identity on the
forward pass and zero on the backward pass (i.e., \texttt{detach} in
autodiff frameworks).
\end{definition}

\subsection{Layer-local supervised contrastive objective}
\label{sec:base_loss}

We write the supervised contrastive loss in closed form to make the
margin variants and their gradient behavior explicit.

\paragraph{Base logits.}
Given modified similarities $\tilde{s}_{uv,\ell}$
(Section~\ref{sec:margin_variants}), define
\begin{equation}
b_{uv,\ell} = \frac{\tilde{s}_{uv,\ell}}{\tau}.
\label{eq:base_logits}
\end{equation}

\paragraph{Numerical-stability shift.}
Let $\alpha_{u,\ell} = \max_{k \neq u} b_{uk,\ell}$. We shift logits
row-wise to prevent overflow:
\begin{equation}
g_{uv,\ell} =
\begin{cases}
b_{uv,\ell} - \sg(\alpha_{u,\ell}), & \text{(\texttt{detach} mode)} \\
b_{uv,\ell} - \alpha_{u,\ell},       & \text{(\texttt{direct} mode)}.
\end{cases}
\label{eq:stability_modes}
\end{equation}
Both modes yield identical forward values; they differ in whether
gradients flow through the $\max$ operator
(Definition~\ref{def:stop_grad}).

\paragraph{Per-anchor log-probabilities.}
For anchor $u$ and index $v \neq u$:
\begin{equation}
\log p_{uv,\ell}
= g_{uv,\ell}
  - \log\!\Bigl(\sum_{k \neq u} \exp(g_{uk,\ell})\Bigr).
\label{eq:logprob_def}
\end{equation}

\paragraph{Supervised contrastive loss.}
Using the mean-over-positives reduction:
\begin{equation}
\mathcal{L}_\ell
= -\frac{1}{2B} \sum_{u=1}^{2B}
   \frac{1}{|\mathcal{P}_u|}
   \sum_{v \in \mathcal{P}_u} \log p_{uv,\ell}.
\label{eq:supcon_loss}
\end{equation}

\paragraph{Forward-Forward locality.}
Each $\mathcal{L}_\ell$ is optimized independently using only the
parameters of layer $\ell$; inter-layer gradients are blocked by
stop-gradient operations, so updates at layer $\ell$ do not propagate
into earlier layers.

\subsection{Positive-pair margin variants}
\label{sec:margin_variants}

We compare two strategies that differ in whether they introduce
similarity saturation.

\subsubsection{Saturating similarity clamping}
\label{sec:clamp_margin}

The clamped variant adds the margin in similarity space and caps the
result at~1:
\begin{equation}
\tilde{s}_{uv,\ell} =
\begin{cases}
\min(s_{uv,\ell} + m_\ell,\, 1), & \text{if } M_{uv} = 1, \\
s_{uv,\ell},                      & \text{if } M_{uv} = 0.
\end{cases}
\label{eq:clamp_margin}
\end{equation}

\paragraph{Temperature-margin coupling.}
Because logits are $\tilde{s}/\tau$
(Eq.~\eqref{eq:base_logits}), a similarity-space margin $m_\ell$
amounts to an effective logit shift of $m_\ell / \tau$ until
saturation. The practical strength of clamping therefore depends on
both $m_\ell$ and $\tau$.

\subsubsection{Gradient-neutral subtraction reference}
\label{sec:subtract_margin}

The subtraction baseline computes log-probabilities from unmodified
similarities ($\tilde{s} = s$) and subtracts the margin after the fact:
\begin{equation}
\log \tilde{p}_{uv,\ell}
= \log p_{uv,\ell} - m_\ell \, M_{uv}.
\label{eq:subtract_margin}
\end{equation}

\begin{proposition}[Post-log-probability subtraction is gradient-neutral]
\label{prop:logprob_margin_constant}
Let $m_\ell \geq 0$ be a fixed scalar (not a function of model
parameters). Under the mean-over-positives loss
\eqref{eq:supcon_loss}, replacing $\log p_{uv,\ell}$ with
$\log \tilde{p}_{uv,\ell}$ from \eqref{eq:subtract_margin} shifts
each per-anchor term by the constant $m_\ell$ and leaves all gradients
with respect to model parameters unchanged.
\end{proposition}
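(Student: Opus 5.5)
The argument is a direct substitution into the loss \eqref{eq:supcon_loss}, followed by the observation that an additive constant has zero derivative. I would proceed in three steps.

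\textbf{Step 1: the per-anchor term shifts by exactly $m_\ell$.} Fix a layer $\ell$ and an anchor $u$ with $|\mathcal{P}_u|\ge 1$. Write the subtraction-variant per-anchor term as
\[
\tilde{\mathcal{L}}_{u,\ell} = -\frac{1}{|\mathcal{P}_u|}\sum_{v\in\mathcal{P}_u}\log\tilde p_{uv,\ell}.
\]
Every $v\in\mathcal{P}_u$ satisfies $M_{uv}=1$ by definition of the positive set. Hence \eqref{eq:subtract_margin} reads $\log\tilde p_{uv,\ell}=\log p_{uv,\ell}-m_\ell$ on every summand. Pulling the constant out of the average gives
\[
\tilde{\mathcal{L}}_{u,\ell} = -\frac{1}{|\mathcal{P}_u|}\sum_{v\in\mathcal{P}_u}\log p_{uv,\ell} + \frac{|\mathcal{P}_u|}{|\mathcal{P}_u|}\,m_\ell .
\]
So the new term equals the original term plus $m_\ell$. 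This is where the mean-over-positives normalization matters: the factor $1/|\mathcal{P}_u|$ cancels the count of summands. The shift is therefore exactly $m_\ell$ and not $|\mathcal{P}_u|\,m_\ell$. The latter, under a sum reduction, would still be parameter-independent but would not be the stated constant.

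\textbf{Step 2: the full loss shifts by a parameter-independent constant.} Average over anchors with the factor $1/(2B)$. This yields $\tilde{\mathcal{L}}_\ell=\mathcal{L}_\ell+c\,m_\ell$, where $c$ is the fraction of anchors with nonempty positive sets. If one adopts the implementation convention that anchors with $\mathcal{P}_u=\emptyset$ are excluded, then $c=1$.

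The key point is that $c$ depends only on the labels $y(u)$, i.e.\ on the mask $M$. Neither $c$ nor $m_\ell$ depends on the parameters $\theta_\ell$, given the hypothesis that $m_\ell$ is a fixed scalar. Moreover, since the subtraction variant uses unmodified similarities ($\tilde s=s$), the quantities $\log p_{uv,\ell}$ are exactly those of the no-margin loss. This holds in either stability mode of \eqref{eq:stability_modes}, because the shift is applied after the log-probabilities are formed.

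\textbf{Step 3: gradients are unchanged.} By linearity of differentiation,
\[
\nabla_{\theta_\ell}\tilde{\mathcal{L}}_\ell=\nabla_{\theta_\ell}\mathcal{L}_\ell+\nabla_{\theta_\ell}(c\,m_\ell)=\nabla_{\theta_\ell}\mathcal{L}_\ell .
\]
Under Forward-Forward locality, each layer's update depends only on its own loss. The conclusion therefore holds for all layers and all model parameters.

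\textbf{Expected obstacle.} There is no real analytic difficulty here. The care needed is bookkeeping, in two places. The first is the treatment of anchors with empty positive sets, where $1/|\mathcal{P}_u|$ is undefined; I would state the exclusion convention explicitly. The second is the dependence on the reduction: a masked mean over all $v\neq u$ would shift by a label-dependent but still constant amount. I would note that the stated per-anchor constant $m_\ell$ relies specifically on the mean-over-positives form.
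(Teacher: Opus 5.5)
Your proof is correct and follows essentially the same route as the paper's. You substitute $\log p_{uv,\ell} - m_\ell$ for every $v \in \mathcal{P}_u$, use the $1/|\mathcal{P}_u|$ normalization to pull out exactly $m_\ell$, and observe that a parameter-independent constant has zero gradient; your extra bookkeeping is sound, though in this two-view setup $|\mathcal{P}_u| \ge 1$ holds automatically (each anchor's other augmented view is a positive), so the empty-set convention never arises and $c = 1$.
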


\begin{proof}
For a fixed anchor $u$:
\[
-\frac{1}{|\mathcal{P}_u|}
 \sum_{v \in \mathcal{P}_u} \log \tilde{p}_{uv,\ell}
= -\frac{1}{|\mathcal{P}_u|}
   \sum_{v \in \mathcal{P}_u}
   (\log p_{uv,\ell} - m_\ell)
= -\frac{1}{|\mathcal{P}_u|}
   \sum_{v \in \mathcal{P}_u} \log p_{uv,\ell}
  \;+\; m_\ell.
\]
The term $m_\ell$ is constant with respect to all model parameters
(by assumption), so it vanishes under differentiation.
\end{proof}

\paragraph{Interpretation.}
The subtraction baseline does not reshape gradients; it serves as a
gradient-neutral reference. Clamping
(Eq.~\eqref{eq:clamp_margin}), by contrast, can truncate gradients
for any positive pair where saturation occurs.

\subsection{Diagnostic metrics}
\label{sec:metrics}

\paragraph{Variance ratio (primary endpoint).}
\begin{equation}
\mathrm{VR}
= \frac{\widehat{\mathrm{Var}}(\texttt{clamp})}
       {\widehat{\mathrm{Var}}(\texttt{subtract})},
\label{eq:vr}
\end{equation}
where $\widehat{\mathrm{Var}}$ is the sample variance of test
accuracies across seeds (Eq.~\eqref{eq:sample_var}).

\paragraph{Clamp Activation Rate (CAR).}
Let $\mathcal{P}_\ell = \{(u,v) : M_{uv} = 1\}$ be the set of
positive pairs at layer $\ell$. The CAR is
\begin{equation}
\mathrm{CAR}_\ell
= \frac{1}{|\mathcal{P}_\ell|}
  \sum_{(u,v) \in \mathcal{P}_\ell}
  \mathbb{1}[s_{uv,\ell} + m_\ell > 1].
\label{eq:car}
\end{equation}

\paragraph{Gradient norm.}
For each layer $\ell$ with parameter set $\Theta_\ell$:
\begin{equation}
\|\nabla_\ell\|
= \sqrt{\sum_{p \in \Theta_\ell}
        \bigl\|\nabla_p \mathcal{L}_\ell\bigr\|_2^2},
\label{eq:grad_norm}
\end{equation}
i.e., the $\ell_2$ norm of the full gradient vector of
$\mathcal{L}_\ell$ with respect to all parameters in the $\ell$-th
encoder block.

% ============================================================================
% 5. EXPERIMENTS
% ============================================================================
\section{Experiments}
\label{sec:experiments}

\subsection{Experimental setup}
\label{sec:exp_setup}

\paragraph{Datasets.}
The primary analysis uses CIFAR-10 with the standard 50k/10k train/test
split; we hold out 5k training images for validation, yielding 45k
train, 5k validation, and 10k test images. Generalization experiments
(Section~\ref{sec:generalization}) repeat the variance audit on
CIFAR-100 (100 classes), SVHN (10 classes, street-view house numbers),
and Fashion-MNIST (10 classes, grayscale). Dataset-specific details
appear in Section~\ref{sec:generalization}.

\paragraph{Architecture and hyperparameters.}
Table~\ref{tab:hparams} summarizes the full configuration. We use a
Vision Transformer~\citep{dosovitskiy2021vit} with embedding dimension
$d=128$, 4 attention heads, and $L=8$ encoder layers. Input images
($32 \times 32$) are divided into $4 \times 4$ patches, producing
$T=64$ tokens per image.

\begin{table}[t]
\centering
\caption{\textbf{Hyperparameter configuration.}}
\label{tab:hparams}
\begin{tabular}{ll}
\toprule
\textbf{Parameter} & \textbf{Value} \\
\midrule
Architecture & ViT ($d{=}128$, $H{=}4$, $L{=}8$) \\
Patch size & $4 \times 4$ \\
Batch size & 512 \\
Stage 1 (repr.\ learning) & 600 epochs, AdamW
  ($\eta{=}4{\times}10^{-3}$, $\beta{=}(0.9, 0.999)$,
   wd${=}10^{-4}$) \\
Stage 2 (linear probe) & 50 epochs, AdamW
  ($\eta{=}5{\times}10^{-4}$) \\
Temperature $\tau$ & 0.15 \\
Standard margin schedule & $m_0{=}0.4 \to m_{L-1}{=}0.1$
  (linear across layers) \\
Low margin schedule & $m_0{=}0.2 \to m_{L-1}{=}0.1$ \\
Augmentation & \texttt{RandomCrop}(32, pad=12),
  \texttt{HFlip}, channel normalization \\
\bottomrule
\end{tabular}
\end{table}

\paragraph{Data augmentation (CIFAR-10 and CIFAR-100).}
Each image undergoes \texttt{RandomCrop(32, padding=12)} and
\texttt{RandomHorizontalFlip}, followed by per-channel normalization
($\mu=(0.491, 0.482, 0.447)$, $\sigma=(0.202, 0.199, 0.201)$). No
RandAugment, AutoAugment, or color jitter is applied. All conditions
share the same augmentation pipeline; augmentation draws are controlled
by the random seed.

\paragraph{Data augmentation (SVHN and Fashion-MNIST).}
SVHN uses the same 12-pixel crop padding as CIFAR-10 with
dataset-specific normalization
($\mu=(0.438, 0.444, 0.473)$, $\sigma=(0.198, 0.201, 0.197)$).
Fashion-MNIST images ($28 \times 28$, grayscale) are converted to
3-channel and resized to $32 \times 32$, normalized with
$\mu = \sigma = (0.5, 0.5, 0.5)$, and augmented with
\texttt{RandomCrop(32, padding=4)} and \texttt{RandomHorizontalFlip}.
The smaller crop padding (4 vs.\ 12~pixels) reflects Fashion-MNIST's
lower spatial complexity relative to natural images.

\paragraph{SVHN difficulty sweep augmentation.}
The difficulty sweep (Section~\ref{sec:svhn_sweep}) varies augmentation
intensity while holding the dataset and architecture fixed:
(i)~\emph{Easy} (standard): \texttt{RandomCrop(32, padding=12)} and
\texttt{HFlip};
(ii)~\emph{Medium}: \texttt{RandomCrop(32, padding=4)},
\texttt{RandomRotation(10)}, and
\texttt{ColorJitter(0.2, 0.2, 0.2)};
(iii)~\emph{Hard}: \texttt{RandomCrop(32, padding=6)},
\texttt{HFlip(0.5)}, \texttt{RandomRotation(15)},
\texttt{RandomErasing(p=0.5, scale=(0.1, 0.3))}, and
\texttt{ColorJitter(0.4, 0.4, 0.4, 0.1)}.

\paragraph{Training protocol.}
CFF training proceeds in two stages:
\begin{enumerate}
    \item \emph{Representation learning.} Each of the $L{=}8$ layers
    is trained against its own supervised contrastive loss
    (Eq.~\eqref{eq:supcon_loss}) for 600 epochs.
    \item \emph{Linear probe.} A linear classifier is trained for 50
    epochs on frozen representations. The reported test accuracy is the
    checkpoint with the highest validation accuracy during this stage.
\end{enumerate}

\paragraph{Random seed control.}
Each seed $\sigma$ jointly determines weight initialization, data-loader
shuffling order, and stochastic augmentation draws. All hyperparameters
are held fixed across seeds and conditions.

\paragraph{Margin schedule.}
Margins decrease linearly across layers:
\begin{equation}
m_\ell = m_0 + (m_{L-1} - m_0)\,\frac{\ell}{L-1},
\quad \ell \in \{0,\dots,L{-}1\}.
\label{eq:margin_schedule}
\end{equation}

\paragraph{Factorial design.}
Under the standard margin schedule, we cross two factors:
(A)~margin type (\texttt{clamp} vs.\ \texttt{subtract}) and
(B)~stability mode (\texttt{detach} vs.\ \texttt{direct}), with $n{=}7$
independent seeds per cell (28 runs total).

\paragraph{Dose-response probe.}
To test whether reducing the margin weakens the variance effect, we
run a low-margin schedule ($m_0{=}0.2 \to m_{L-1}{=}0.1$) with 7
seeds each for \texttt{clamp\_detach} and \texttt{clamp\_direct} (14
clamped runs). The subtraction reference reuses the 7
\texttt{subtract\_detach} seeds from the standard-margin factorial; this
is valid because the subtraction baseline is gradient-neutral
(Proposition~\ref{prop:logprob_margin_constant}), so the margin
schedule value does not affect the subtract condition's gradients or
trained model. The resulting pooled comparison is unbalanced ($n{=}14$
clamp vs.\ $n{=}7$ subtract); we conserve compute based on the
factorial's finding that stability mode does not affect variance
($p > 0.82$; Table~\ref{tab:factorial_var_tests}).

\paragraph{Compute.}
All runs were conducted on NVIDIA V100-SXM2-32GB GPUs. Each CIFAR-10
run takes roughly 7 GPU-hours. The full experimental campaign
(CIFAR-10 factorial, dose-response, and generalization datasets) totals
roughly 88 runs and ${\sim}550$ GPU-hours.

\subsection{Results: CIFAR-10}
\label{sec:results}

\subsubsection{Standard-margin factorial results}
\label{sec:std_results}

Table~\ref{tab:c10_factorial} reports test accuracy statistics for each
cell of the $2 \times 2$ factorial under the standard margin schedule.
Per-seed accuracies are listed in Appendix~\ref{app:per_seed_c10}.

\begin{table}[t]
\centering
\caption{\textbf{CIFAR-10 factorial results (standard margin
$0.4 \to 0.1$).} Test accuracy across $n{=}7$ seeds per cell.}
\label{tab:c10_factorial}
\begin{tabular}{llccc}
\toprule
\textbf{Margin type} & \textbf{Stability} & \textbf{Mean (\%)}
  & \textbf{Std (\%)} & \textbf{Var} \\
\midrule
\texttt{clamp}    & \texttt{detach} & 78.52 & 0.927 & 0.8590 \\
\texttt{clamp}    & \texttt{direct} & 78.44 & 1.158 & 1.3408 \\
\midrule
\texttt{subtract} & \texttt{detach} & 78.73 & 0.467 & 0.2178 \\
\texttt{subtract} & \texttt{direct} & 78.30 & 0.224 & 0.0500 \\
\bottomrule
\end{tabular}
\end{table}

\paragraph{Factorial variance tests.}
To assess whether variability depends on margin type, stability mode, or
their interaction, we apply Levene and Brown-Forsythe procedures (two-way
ANOVA on absolute deviations from cell means and medians, respectively).
Table~\ref{tab:factorial_var_tests} reports the results. Neither test
finds evidence that stability mode affects variance (both $p > 0.82$).
The margin-type effect is marginal ($p \approx 0.06$) but directionally
consistent with the pooled comparison below. The interaction is not
supported under Brown-Forsythe ($p = 0.137$); the marginal Levene
interaction ($p = 0.058$) appears driven by the higher variance of the
\texttt{clamp\_direct} cell.

\begin{table}[t]
\centering
\caption{\textbf{Factorial variance sensitivity tests (standard margin).}}
\label{tab:factorial_var_tests}
\begin{tabular}{lccc}
\toprule
\textbf{Test} & \textbf{Margin type} & \textbf{Stability mode}
  & \textbf{Interaction} \\
\midrule
Levene          & 0.061 & 0.824 & 0.058 \\
Brown-Forsythe & 0.058 & 0.856 & 0.137 \\
\bottomrule
\end{tabular}
\end{table}

\paragraph{Pooled primary comparison.}
Because stability mode does not affect variance, we pool across
stability modes for the primary analysis, yielding $n{=}14$ seeds per
margin type (Table~\ref{tab:c10_pooled}).

\begin{table}[t]
\centering
\caption{\textbf{Pooled test accuracy by margin type
(standard margin, $n{=}14$ per group).}}
\label{tab:c10_pooled}
\begin{tabular}{lcccc}
\toprule
\textbf{Condition} & \textbf{n} & \textbf{Mean (\%)}
  & \textbf{Std (\%)} & \textbf{Var} \\
\midrule
Clamp (pooled)    & 14 & 78.48 & 1.008 & 1.0170 \\
Subtract (pooled) & 14 & 78.51 & 0.415 & 0.1724 \\
\bottomrule
\end{tabular}
\end{table}

\paragraph{Mean comparison.}
A Welch two-sample $t$-test finds no evidence of a mean difference
between margin types ($t(19.4) = 0.10$, $p = 0.92$; 95\% CI for
$\mu_{\texttt{clamp}} - \mu_{\texttt{subtract}}$: $[-0.64, 0.59]$).
The variance difference therefore cannot be attributed to a shift in
central tendency.

\paragraph{Variance comparison.}
An F-test for equality of variances yields $F(13, 13) = 5.90$,
$p = 0.003$, rejecting $H_0$ (Eq.~\eqref{eq:null_hyp}). We
additionally report:
\begin{itemize}
    \item Levene's test: $p = 0.058$ (marginal, consistent in
    direction).
    \item Bootstrap 95\% CI for VR: $[1.62, 15.80]$ (10{,}000
    percentile-bootstrap resamples of the ratio of sample variances).
\end{itemize}
Shapiro-Wilk tests do not reject normality for either pooled group
($W = 0.951$, $p = 0.58$ for clamp; $W = 0.952$, $p = 0.59$ for
subtract; Table~\ref{tab:shapiro} in
Appendix~\ref{app:normality}), supporting the F-test assumption.

Figure~\ref{fig:seed_spread} shows the per-seed accuracy
distributions.

\begin{figure}[t]
\centering
\includegraphics[width=0.55\linewidth]{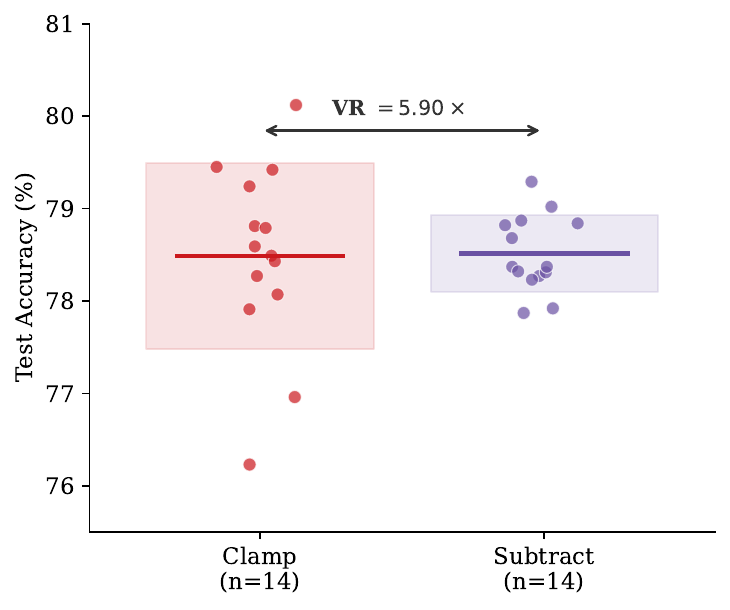}
\caption{\textbf{Per-seed test accuracy by margin type (standard
margin, pooled).} Each dot is one seed ($n{=}14$ per group).
Horizontal lines show group means; shaded bands show
$\pm 1$ standard deviation.}
\label{fig:seed_spread}
\end{figure}

\subsubsection{Dose-response probe}
\label{sec:dose_response}

If the variance effect is mediated by saturation, reducing the starting
margin should weaken it. Table~\ref{tab:dose_response} compares the
two margin schedules.

\begin{table}[t]
\centering
\caption{\textbf{Dose-response probe.} Reducing the margin schedule
from $0.4 \to 0.1$ to $0.2 \to 0.1$ lowers the observed variance
ratio. The low-margin comparison is unbalanced ($n{=}14$ clamp vs.\
$n{=}7$ subtract; see Section~\ref{sec:exp_setup}).}
\label{tab:dose_response}
\begin{tabular}{lcccc}
\toprule
\textbf{Schedule} & \textbf{Clamp var} & \textbf{Subtract var}
  & \textbf{VR} & \textbf{F-test $p$} \\
\midrule
$0.4 \to 0.1$ (standard) & 1.0170 & 0.1724
  & $5.90\times$ & 0.003 \\
$0.2 \to 0.1$ (low) & 0.6498 & 0.2178
  & $2.98\times$ & 0.19 \\
\bottomrule
\end{tabular}
\end{table}

Halving the starting margin from $m_0 = 0.4$ to $m_0 = 0.2$ cut
the variance ratio from $5.90\times$ to $2.98\times$, consistent with
a dose-response relationship. The low-margin ratio does not reach
significance on its own ($F(13,6) = 2.98$, $p = 0.19$; bootstrap
95\% CI for VR: $[0.79, 30.77]$), likely reflecting lower power
from the smaller effect size and unbalanced design ($n{=}14$ vs.\
$n{=}7$). We therefore read this probe as directional evidence
for the saturation-mediated mechanism, not as a standalone
significance claim.

A Welch $t$-test on the low-margin pooled means yields $t(18.4) =
-1.59$, $p = 0.13$ (95\% CI for
$\mu_{\texttt{clamp}} - \mu_{\texttt{subtract}}$: $[-1.03, +0.14]$
pp), showing no significant mean difference, though the trend is
less cleanly null than under the standard margin ($p = 0.92$).

This probe has two limitations. First, because $\tau$ is fixed,
lowering $m_0$ simultaneously reduces the effective logit shift
$m_0 / \tau$, confounding changes in saturation frequency
(Eq.~\eqref{eq:car}) with overall logit-scale changes. Second, the
bootstrap CI for the low-margin VR includes 1.0, so the data are
also consistent with no variance difference under this reduced
schedule.
% ============================================================================
% 6. MECHANISM ANALYSIS
% ============================================================================
\section{Mechanism Analysis}
\label{sec:mechanism}

This section presents diagnostic evidence bearing on why the clamped
variant shows higher seed-to-seed variance on CIFAR-10. We stress
that the diagnostics establish necessary ingredients for the proposed
mechanism but do not amount to a causal proof.

\subsection{Gradient truncation under saturation}
\label{sec:clamp_grad}

The derivative of $f(x) = \min(x, 1)$ is $f'(x) = 1$ for $x < 1$ and
$f'(x) = 0$ for $x > 1$. By Eq.~\eqref{eq:clamp_margin}, for any
positive pair $(u,v)$ at layer $\ell$ with
$s_{uv,\ell} + m_\ell > 1$, the modified similarity
$\tilde{s}_{uv,\ell} = 1$ is constant as a function of
$s_{uv,\ell}$, so the \emph{direct} (numerator) gradient contribution
of that pair to anchor $u$'s loss term is zero.

\paragraph{Denominator pathway.}
Gradient truncation along the direct path does not mean the full
derivative $\partial \mathcal{L}_\ell / \partial s_{uv,\ell}$ is zero.
The clamped similarity $\tilde{s}_{uv,\ell} = 1$ still enters the
softmax denominators of \emph{other} anchors $u' \neq u$ through
Eq.~\eqref{eq:logprob_def}. Gradients can therefore flow through
this indirect (denominator) pathway. The net effect of saturation is
a partial, not complete, gradient truncation, one that is
data-dependent and varies with which pairs happen to exceed the
saturation threshold on each minibatch.

\subsection{Clamp Activation Rate by layer}
\label{sec:car_profile}

Table~\ref{tab:car_full} reports CAR (Eq.~\eqref{eq:car}) across all
8 layers for both margin schedules, computed at the final Stage-1 epoch
(epoch 600) by averaging over minibatches and seeds.

\begin{table}[t]
\centering
\caption{\textbf{Clamp Activation Rate (CAR) by layer (CIFAR-10).}}
\label{tab:car_full}
\begin{tabular}{ccccc}
\toprule
\textbf{Layer} & \textbf{Margin (std / low)}
  & \textbf{CAR (std)} & \textbf{CAR (low)}
  & \textbf{Reduction} \\
\midrule
0 & 0.40 / 0.20 & 60.7\% & 40.3\% & 33.7\% \\
1 & 0.36 / 0.19 & 58.3\% & 43.7\% & 24.9\% \\
2 & 0.31 / 0.17 & 54.0\% & 42.5\% & 21.3\% \\
3 & 0.27 / 0.16 & 49.6\% & 40.5\% & 18.3\% \\
4 & 0.23 / 0.14 & 45.8\% & 38.6\% & 15.6\% \\
5 & 0.19 / 0.13 & 41.6\% & 36.6\% & 12.0\% \\
6 & 0.14 / 0.11 & 37.5\% & 34.5\% &  7.8\% \\
7 & 0.10 / 0.10 & 32.3\% & 32.3\% &  0.2\% \\
\bottomrule
\end{tabular}
\end{table}

Higher starting margins produce markedly higher CAR in early layers,
with the gap narrowing monotonically and vanishing at layer~7 where
both schedules share $m = 0.1$. Under the low-margin schedule, CAR
rises slightly from layer~0 (40.3\%) to layer~1 (43.7\%) despite a
lower margin at layer~1; this likely reflects tighter within-class
cosine similarity at layer~1 (i.e., more clustered representations),
though we do not pursue this further.

\subsection{Gradient norms by layer}
\label{sec:grad_norms}

Table~\ref{tab:grad_norms} reports gradient $\ell_2$ norms
(Eq.~\eqref{eq:grad_norm}) at the final training epoch for layers
0--3. We restrict to these layers because CAR drops below 50\% by
layer~3 under the standard schedule, and layers 4--7 show negligible
differences between conditions ($< 5\%$ relative difference in
gradient norms; not tabulated).

\begin{table}[t]
\centering
\caption{\textbf{Mean gradient norms by layer (final epoch, CIFAR-10).}
Values are mean $\pm$ std across seeds.}
\label{tab:grad_norms}
\begin{tabular}{lccc}
\toprule
\textbf{Layer} & \textbf{Clamp} & \textbf{Subtract}
  & \textbf{Ratio (subtract / clamp)} \\
\midrule
L0 & $0.400 \pm 0.028$ & $1.614 \pm 0.045$ & $4.0\times$ \\
L1 & $0.115 \pm 0.008$ & $0.243 \pm 0.008$ & $2.1\times$ \\
L2 & $0.048 \pm 0.002$ & $0.049 \pm 0.008$ & $1.0\times$ \\
L3 & $0.023 \pm 0.003$ & $0.018 \pm 0.007$ & $0.8\times$ \\
\bottomrule
\end{tabular}
\end{table}

At layer~0, where CAR is highest (60.7\%), clamping reduces gradient
norms by a factor of $4.0\times$ relative to the subtraction baseline.
The reduction narrows to $2.1\times$ at layer~1 and disappears by
layer~2. This correspondence between CAR and gradient norm reduction
across layers is consistent with saturation-induced truncation being
the dominant cause of the gradient difference at early layers.

Figure~\ref{fig:car_gradnorms} plots both profiles together.

\begin{figure}[t]
\centering
\includegraphics[width=\linewidth]{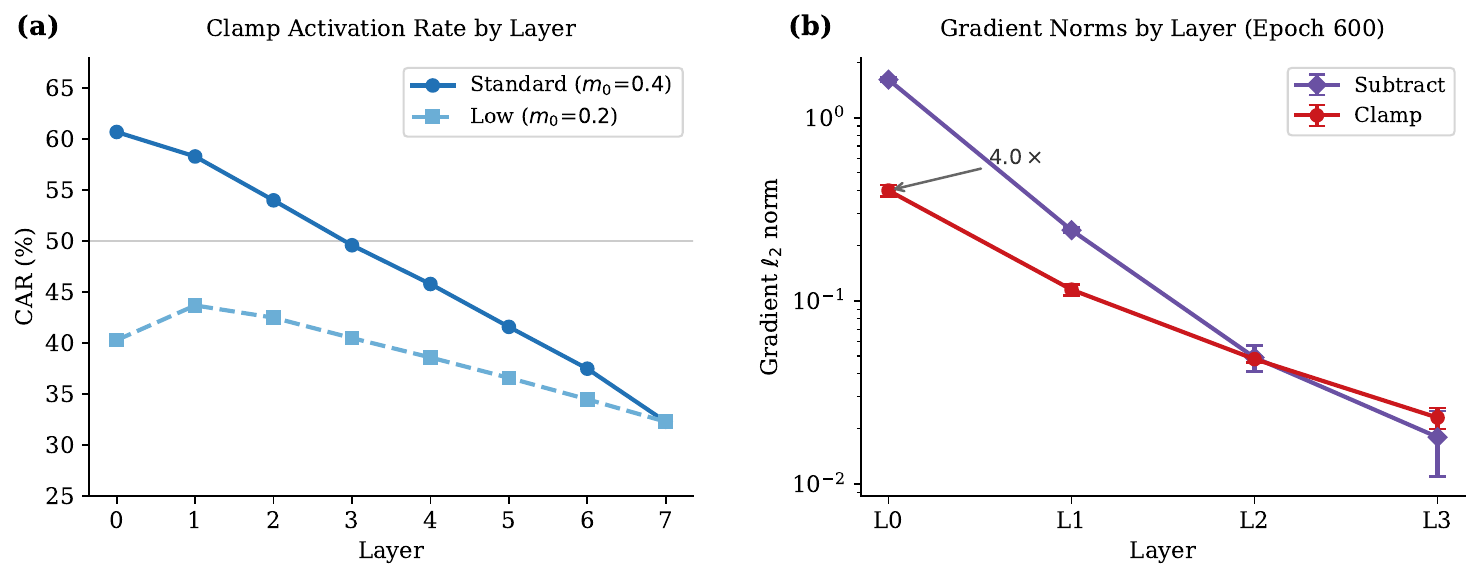}
\caption{\textbf{Diagnostic profiles by layer (CIFAR-10).} (a)~CAR
under standard and low-margin schedules. (b)~Gradient $\ell_2$ norms
at the final epoch. The gradient norm gap between conditions tracks
the CAR profile, closing by layer~2.}
\label{fig:car_gradnorms}
\end{figure}

\subsection{Mechanistic interpretation}
\label{sec:mech_interp}

The diagnostic evidence fits the following qualitative account. Under
the standard schedule, over 60\% of positive pairs at layer~0 exceed
the clamp threshold (Table~\ref{tab:car_full}); reducing the starting
margin halves this rate (Table~\ref{tab:car_full}, Reduction column).
This frequent saturation truncates gradients at early layers: gradient
norms at layer~0 are $4.0\times$ lower under clamping than under the
subtraction baseline (Table~\ref{tab:grad_norms}), and the reduction
tracks the CAR profile across layers
(Figure~\ref{fig:car_gradnorms}).

The truncation is not uniform: which positive pairs saturate depends on
the realized cosine similarities, which vary with initialization and
data ordering. Different seeds therefore experience different patterns
of gradient zeroing across minibatches, causing optimization
trajectories to separate more than under the non-truncating baseline.
The $5.90\times$ variance ratio (Table~\ref{tab:c10_pooled}) is the
aggregate outcome; halving the margin reduces VR to $2.98\times$
(Table~\ref{tab:dose_response}), consistent with less saturation
producing less separation.

The first two links in this chain (frequent saturation and reduced
gradient norms) are directly verified by our measurements. The
subsequent links, from stochastic truncation to trajectory separation
and from separation to accuracy variance, are inferred from the joint
pattern of evidence rather than demonstrated causally. In particular,
we do not track CAR or gradient norms over training (only at epoch
600), nor do we measure per-seed variation in CAR, which would more
directly test whether seeds with higher truncation rates produce
more extreme final accuracies.

% ============================================================================
% 7. CROSS-DATASET GENERALIZATION
% ============================================================================
\section{Cross-Dataset Generalization}
\label{sec:generalization}

The CIFAR-10 results establish that clamping can inflate variance when
CAR is high. To map out the conditions under which this effect appears,
we repeat the variance audit on three additional datasets that differ
in number of classes, positive-pair density, task difficulty, and visual
domain. Table~\ref{tab:cross_dataset_summary} summarizes the main
results; we discuss each dataset below.

\begin{table}[t]
\centering
\caption{\textbf{Cross-dataset summary.} Variance ratio (VR), layer-0
CAR, approximate positive pairs per minibatch (batch size 512), and
mean clamp accuracy. CIFAR-10 is the only dataset exhibiting
$\mathrm{VR} > 1$.}
\label{tab:cross_dataset_summary}
\begin{tabular}{lccccc}
\toprule
\textbf{Dataset} & \textbf{$K$} & \textbf{Pos.\ pairs}
  & \textbf{L0 CAR} & \textbf{Acc.\ (\%)} & \textbf{VR} \\
\midrule
CIFAR-10   & 10  & ${\sim}25{,}700$ & 60.7\% & 78.5 & $5.90\times$ \\
CIFAR-100  & 100 & ${\sim}2{,}580$  & 29.0\% & 51.8 & $0.39\times$ \\
SVHN       & 10  & ${\sim}30{,}500$ & 51.0\% & 96.7 & $0.25\times$ \\
Fash-MNIST & 10  & ${\sim}25{,}700$ & ---    & 92.6 & $0.08\times$ \\
\bottomrule
\end{tabular}
\end{table}

\subsection{CIFAR-100: Low positive-pair density}
\label{sec:cifar100}

With 100 classes and batch size 512, each minibatch contains roughly
$2{,}580$ positive pairs, about $10\times$ fewer than CIFAR-10's
${\sim}25{,}700$. We ran a $2 \times 2$ factorial ($n{=}5$ seeds per
cell; architecture and training identical to CIFAR-10 except
$K{=}100$).

\begin{table}[t]
\centering
\caption{\textbf{CIFAR-100 pooled results ($n{=}10$ per group).}}
\label{tab:c100_pooled}
\begin{tabular}{lcccc}
\toprule
\textbf{Condition} & \textbf{n} & \textbf{Mean (\%)}
  & \textbf{Std (\%)} & \textbf{Var} \\
\midrule
Clamp (pooled)    & 10 & 51.82 & 0.505 & 0.255 \\
Subtract (pooled) & 10 & 51.38 & 0.814 & 0.663 \\
\bottomrule
\end{tabular}
\end{table}

The variance ratio is $0.39\times$, inverted relative to CIFAR-10, and
non-significant ($F(9,9) = 0.39$, $p = 0.17$). CAR measurements on
CIFAR-100 reveal the reason: L0 CAR is only 29.0\%
(Table~\ref{tab:car_c100}), compared to 60.7\% on CIFAR-10, and
average CAR across layers is 19.7\% versus 47.5\%. With fewer positive
pairs per batch, fewer similarity values approach the saturation
boundary, so the gradient truncation pathway that drives variance
inflation on CIFAR-10 is largely inactive.

\begin{table}[t]
\centering
\caption{\textbf{CAR comparison: CIFAR-10 vs.\ CIFAR-100
(standard margin).}}
\label{tab:car_c100}
\begin{tabular}{cccc}
\toprule
\textbf{Layer} & \textbf{CAR (CIFAR-10)} & \textbf{CAR (CIFAR-100)}
  & \textbf{Difference} \\
\midrule
0 & 60.7\% & 29.0\% & $-31.7$ pp \\
1 & 58.3\% & 27.8\% & $-30.5$ pp \\
2 & 54.0\% & 24.2\% & $-29.8$ pp \\
3 & 49.6\% & 21.4\% & $-28.2$ pp \\
4 & 45.8\% & 18.4\% & $-27.4$ pp \\
5 & 41.6\% & 15.5\% & $-26.1$ pp \\
6 & 37.5\% & 12.3\% & $-25.2$ pp \\
7 & 32.3\% &  8.7\% & $-23.6$ pp \\
\midrule
\textbf{Mean} & \textbf{47.5\%} & \textbf{19.7\%} & $-27.8$ pp \\
\bottomrule
\end{tabular}
\end{table}

Layerwise gradient norms confirm that low CAR translates to an absence
of gradient truncation (Table~\ref{tab:gradnorms_c100}). At layer~0,
the clamp-to-subtract norm ratio is $0.58\times$: clamping produces
\emph{smaller} gradients, not larger ones. This is the reverse of
CIFAR-10, where the same ratio is $0.25\times$ (clamp norms are
$4.0\times$ lower, meaning subtract norms dominate). On CIFAR-100,
the two conditions produce similar gradient landscapes, consistent
with the absence of a variance effect.

\begin{table}[t]
\centering
\caption{\textbf{Mean gradient norms by layer (CIFAR-100, single seed).}
Contrast with CIFAR-10 (Table~\ref{tab:grad_norms}), where clamping
reduces L0 norms by $4.0\times$.}
\label{tab:gradnorms_c100}
\begin{tabular}{lcccc}
\toprule
\textbf{Layer} & \textbf{Margin} & \textbf{Clamp} & \textbf{Subtract}
  & \textbf{Ratio (C/S)} \\
\midrule
L0 & 0.400 & $1.265 \pm 0.099$ & $2.170 \pm 0.279$ & $0.58\times$ \\
L1 & 0.357 & $0.262 \pm 0.019$ & $0.358 \pm 0.032$ & $0.73\times$ \\
L2 & 0.314 & $0.080 \pm 0.004$ & $0.073 \pm 0.004$ & $1.09\times$ \\
L3 & 0.271 & $0.046 \pm 0.002$ & $0.029 \pm 0.002$ & $1.60\times$ \\
\bottomrule
\end{tabular}
\end{table}

\subsection{SVHN: High accuracy, high CAR}
\label{sec:svhn}

SVHN is a 10-class digit recognition dataset with non-uniform class
frequencies, yielding ${\sim}30{,}500$ positive pairs per minibatch.
We ran $n{=}5$ seeds each for \texttt{clamp\_direct} and
\texttt{subtract\_direct}.

\begin{table}[t]
\centering
\caption{\textbf{SVHN results ($n{=}5$ per group).}}
\label{tab:svhn_results}
\begin{tabular}{lcccc}
\toprule
\textbf{Condition} & \textbf{n} & \textbf{Mean (\%)}
  & \textbf{Std (\%)} & \textbf{Var} \\
\midrule
Clamp    & 5 & 96.70 & 0.430 & 0.185 \\
Subtract & 5 & 95.29 & 0.853 & 0.727 \\
\bottomrule
\end{tabular}
\end{table}

The variance ratio is $0.25\times$ (inverted; $F(4,4) = 0.25$,
$p = 0.21$). SVHN's CAR profile differs qualitatively from
CIFAR-10's: CAR \emph{increases} with depth (L0: 51.0\%, L7:
78.3\%), whereas CIFAR-10's decreases monotonically (L0: 60.7\%, L7:
32.3\%). Average CAR is 70.6\%, well above CIFAR-10's 47.5\%. Yet
the variance ratio is inverted.

This dissociation between CAR level and variance inflation shows that
high CAR is \emph{not sufficient} for the variance effect. We
hypothesize that SVHN's high accuracy (${\sim}97\%$) compresses the
room for seed-to-seed separation: when the task is nearly solved, all
seeds converge to similar optima regardless of gradient truncation.

\subsection{Fashion-MNIST: High accuracy, moderate difficulty}
\label{sec:fmnist}

Fashion-MNIST is a 10-class grayscale dataset. We ran $n{=}5$ seeds
each for clamp and subtract conditions.

\begin{table}[t]
\centering
\caption{\textbf{Fashion-MNIST results ($n{=}5$ per group).}}
\label{tab:fmnist_results}
\begin{tabular}{lcccc}
\toprule
\textbf{Condition} & \textbf{n} & \textbf{Mean (\%)}
  & \textbf{Std (\%)} & \textbf{Var} \\
\midrule
Clamp    & 5 & 92.57 & 0.113 & 0.0127 \\
Subtract & 5 & 91.87 & 0.406 & 0.1645 \\
\bottomrule
\end{tabular}
\end{table}

The variance ratio is $0.08\times$, the strongest inversion across all
datasets, and is significant ($F(4,4) = 0.08$, $p = 0.029$).
Fashion-MNIST's high accuracy (${\sim}92\%$) places it in the same
regime as SVHN: task difficulty is low enough that seeds converge to
similar optima despite gradient truncation. CAR was not measured for
Fashion-MNIST; based on the SVHN pattern (high accuracy co-occurring
with high CAR due to well-clustered representations), we would expect
moderate-to-high CAR that nonetheless does not produce variance
inflation.

\subsection{SVHN difficulty sweep}
\label{sec:svhn_sweep}

To test directly whether task difficulty moderates the
clamping-variance relationship, we repeated the SVHN experiment with
progressively more aggressive augmentation that lowers accuracy. This
holds the dataset, architecture, and number of classes constant while
varying only the optimization difficulty.

\begin{table}[t]
\centering
\caption{\textbf{SVHN difficulty sweep.} Increasing augmentation
difficulty on the same dataset drives VR from $0.25\times$ (easy,
${\sim}97\%$ accuracy) to $16.73\times$ (hard, ${\sim}25\%$ accuracy).}
\label{tab:svhn_sweep}
\begin{tabular}{lcccccl}
\toprule
\textbf{Difficulty} & \textbf{$n$} & \textbf{Acc.\ (\%)} & \textbf{Clamp var}
  & \textbf{Subtract var} & \textbf{VR} & \textbf{Pattern} \\
\midrule
Easy (standard) & 5 & 96.7 & 0.185 & 0.727 & $0.25\times$ & Inverted \\
Medium          & 5 & 26.8 & 37.57 & 17.23 & $2.18\times$ & Elevated \\
Hard            & 5 & 25.0 & 85.45 &  5.11 & $16.73\times$ & Extreme \\
\bottomrule
\end{tabular}
\end{table}

The sweep reveals a sharp transition: on the same dataset with the
same architecture, increasing augmentation difficulty moves the
variance ratio from $0.25\times$ to $16.73\times$. Under hard
augmentation, the clamp condition shows bimodal behavior, with some
seeds converging (${\sim}35\%$ accuracy) while others fail
(${\sim}18\%$), producing extreme variance. This is the strongest
evidence that task difficulty, not CAR alone, determines whether
clamping destabilizes training.

\subsection{Synthesis: When does clamping affect variance?}
\label{sec:synthesis}

The cross-dataset evidence suggests that clamping inflates variance
when two conditions co-occur:

\begin{enumerate}
    \item \textbf{High L0 clamp activation rate.} CIFAR-100's low
    L0 CAR (29.0\%) prevents the truncation pathway from activating.
    High CAR is necessary but not sufficient.

    \item \textbf{Intermediate task difficulty.} SVHN and
    Fashion-MNIST reach $> 92\%$ accuracy, leaving little room for
    trajectories to separate. At the other extreme, very low accuracy
    under hard augmentation produces extreme variance
    (Table~\ref{tab:svhn_sweep}).
\end{enumerate}

CIFAR-10 is the only dataset in our study that satisfies both
conditions: high L0 CAR (60.7\%) and intermediate accuracy
(${\sim}78\%$). The SVHN difficulty sweep confirms that these
conditions interact on a single dataset: the same architecture
produces $\mathrm{VR} = 0.25\times$ at 97\% accuracy and
$\mathrm{VR} = 16.73\times$ at 25\% accuracy.

% ============================================================================
% 8. DISCUSSION
% ============================================================================
\section{Discussion}
\label{sec:discussion}

\paragraph{Variance without mean cost.}
On CIFAR-10, the clamped variant produces $5.90\times$ higher
test-accuracy variance with no detectable difference in mean accuracy
($p = 0.92$; Table~\ref{tab:c10_pooled}). This means that switching
to the subtraction reference reduces noise at no observed cost to the
central estimate in this setting.

\paragraph{Practical consequences for seed budgets.}
The standard error of the sample mean scales as
$\mathrm{Std} / \sqrt{n}$. To put this concretely: under clamping on
CIFAR-10 ($\mathrm{Std} \approx 1.0$ pp), reaching a standard error
of $\pm 0.3$ pp requires $n \approx 11$ seeds; under subtraction
($\mathrm{Std} \approx 0.42$ pp), $n \approx 2$ is enough. This
difference can determine whether a hyperparameter comparison is
adequately powered under finite compute.

\paragraph{Why layer-local training may amplify the effect.}
In end-to-end backpropagation, gradient truncation at one layer is
modulated by the full chain of downstream gradients, and compensating
signals from other layers can partially offset local artifacts. In
CFF, each layer optimizes independently with no inter-layer gradient
flow, so saturation-induced truncation at layer~0 has no compensating
path from deeper layers. This structural difference may make CFF more
sensitive to margin-induced gradient truncation than end-to-end
training.

\paragraph{Dataset dependence and the role of positive-pair density.}
The inverted variance ratio on CIFAR-100 ($\mathrm{VR} = 0.39\times$)
is accounted for by its $10\times$ lower positive-pair density (100
classes vs.\ 10), which produces correspondingly lower CAR at every
layer (Table~\ref{tab:car_c100}). This confirms that the gradient
truncation mechanism depends on a sufficient density of positive pairs
to create widespread saturation. Layerwise gradient norms support this
picture: on CIFAR-100, the clamp-to-subtract gradient norm ratio at
layer~0 is $0.58\times$ (1.27 vs.\ 2.17), showing that clamping does
\emph{not} amplify gradient magnitudes when CAR is low. On CIFAR-10,
the analogous ratio is $4.0\times$ higher under clamping
(Table~\ref{tab:grad_norms}). Practitioners working with many-class
problems are unlikely to encounter the variance inflation documented
on CIFAR-10.

\paragraph{Task difficulty as a moderating factor.}
SVHN and Fashion-MNIST show that high CAR is not sufficient for
variance inflation: both reach $> 92\%$ accuracy, limiting
seed-to-seed spread regardless of truncation levels. The SVHN
difficulty sweep (Table~\ref{tab:svhn_sweep}) offers the most direct
evidence for this interaction, showing on a single dataset that
lowering accuracy from 97\% to 25\% moves the variance ratio from
$0.25\times$ to $16.73\times$. The clamping-variance effect thus
appears most pronounced in an intermediate-difficulty regime where
optimization trajectories are sensitive to gradient perturbations
but the task is not so easy that all seeds converge regardless.

\paragraph{Connection to margin saturation in metric learning.}
Hard margins in face recognition objectives (SphereFace, CosFace,
ArcFace) produce saturated logit regions by design. Our finding
suggests that variance audits, not just mean-accuracy comparisons,
may be informative when evaluating margin implementations in those
settings, though end-to-end gradient flow and different loss
reductions limit how far the analogy extends.

\paragraph{Practical guidelines.}
Based on the cross-dataset evidence:
(i)~In settings resembling CIFAR-10 (moderate accuracy, high
positive-pair density), switching to the subtraction reference yields
a large variance reduction at no observed mean cost.
(ii)~In high-accuracy settings ($> 90\%$) or many-class settings (low
positive-pair density), clamping does not appear to inflate variance
and may not need modification.
(iii)~When in doubt, measuring L0 CAR provides a simple diagnostic:
values well below 50\% suggest the truncation pathway is largely
inactive.

% ============================================================================
% 9. LIMITATIONS AND THREATS TO VALIDITY
% ============================================================================
\section{Limitations and Threats to Validity}
\label{sec:limitations}

\paragraph{Single architecture and training protocol.}
All results use a fixed ViT configuration ($d{=}128$, $H{=}4$,
$L{=}8$), training schedule (600 Stage-1 epochs), and augmentation
pipeline. We do not test how the variance effect depends on model
scale, depth, or alternative training schedules.

\paragraph{Cross-dataset design imbalances.}
The CIFAR-10 primary analysis uses the full $2 \times 2$ factorial
($n{=}7$ per cell, $n{=}14$ pooled). CIFAR-100 uses $n{=}5$ per cell
($n{=}10$ pooled). SVHN and Fashion-MNIST use $n{=}5$ per condition
without the stability-mode factor. These design differences reflect
compute allocation that prioritized the primary dataset; the
generalization experiments have correspondingly lower statistical
power.

\paragraph{SVHN design limitations.}
SVHN experiments used only the \texttt{direct} stability mode because
a bug in the \texttt{detach} condition was discovered after the fact.
The CIFAR-10 factorial establishes that stability mode does not affect
variance ($p > 0.82$), so this limitation is unlikely to change the
conclusions, but we note the asymmetry.

\paragraph{Missing Fashion-MNIST CAR.}
CAR was not measured for Fashion-MNIST due to checkpoint availability.
The significant variance inversion ($\mathrm{VR} = 0.08\times$,
$p = 0.029$) at 92.6\% accuracy is consistent with the
task-difficulty moderation pattern observed on SVHN, but without CAR
data we cannot confirm whether the truncation pathway is active or
inactive on this dataset.

\paragraph{Temperature-margin confound.}
Clamping (Eq.~\eqref{eq:clamp_margin}) is applied before temperature
scaling (Eq.~\eqref{eq:base_logits}), so its effective logit-space
strength is $m_\ell / \tau$ until saturation. In the dose-response
probe, we vary $m_0$ while holding $\tau = 0.15$ fixed, confounding
changes in saturation frequency with overall logit-scale changes.

\paragraph{SVHN difficulty sweep confounds.}
The difficulty sweep varies augmentation intensity, which simultaneously
changes task difficulty, the distribution of pairwise similarities, and
the effective positive-pair informativeness. We cannot separate which
of these factors drives the VR transition.

\paragraph{Augmentation sensitivity.}
Our augmentation pipeline uses 12-pixel padding for random crops on
$32 \times 32$ images, which is more aggressive than the standard
4-pixel padding. Because augmentation intensity affects the
distribution of pairwise cosine similarities, and hence the frequency
of clamp saturation, the size of the variance effect may differ under
milder augmentation. The within-experiment comparison remains valid
(all conditions share identical augmentation), but the absolute CAR
values and variance ratios should not be assumed to carry over to
other pipelines.

\paragraph{Two-stage pipeline.}
The reported test accuracy reflects both representation learning
(Stage~1) and a linear probe on frozen representations (Stage~2).
Our mechanistic analysis focuses entirely on Stage~1 (CAR, gradient
norms), but probe-stage variance could also contribute to the observed
seed-to-seed differences. We do not separate the two stages'
contributions to total variance.

\paragraph{Diagnostic timing.}
CAR and gradient norms are reported at the final Stage-1 epoch only.
We do not track how these quantities change over training.

\paragraph{Per-seed mechanistic link.}
CAR and gradient norms are reported as cross-seed averages. We do not
measure per-seed variation in these diagnostics, which would more
directly test whether seeds that experience more frequent truncation
produce more extreme final accuracies.

\paragraph{Statistical power.}
The standard-margin factorial uses $n = 7$ seeds per cell ($n = 14$
pooled per margin type). While the F-test rejects variance equality
at $p = 0.003$, the bootstrap CI for VR is wide ($[1.62, 15.80]$),
so the \emph{magnitude} of the effect is less precisely pinned down
than its \emph{direction}. The generalization datasets use even
smaller samples ($n = 5$), further limiting power for individual
dataset-level significance tests.

\paragraph{Reduction scope.}
Proposition~\ref{prop:logprob_margin_constant} establishes
gradient-neutrality under the mean-over-positives reduction used
throughout this paper. Under alternative reductions (e.g., sum over
positives), the gradient-neutrality result would not hold.

% ============================================================================
% 10. CONCLUSION
% ============================================================================
\section{Conclusion}
\label{sec:conclusion}

We presented a formal specification of the supervised contrastive
loss used in Contrastive Forward-Forward training and proved that a
common margin variant, post-log-probability subtraction, is
gradient-neutral under the mean-over-positives reduction
(Proposition~\ref{prop:logprob_margin_constant}), establishing it as
a true no-margin reference. Using this reference, we separated the
effect of the alternative implementation: saturating similarity
clamping via $\min(s + m, 1)$.

On CIFAR-10 ($2 \times 2$ factorial, $n{=}7$ seeds per cell),
clamping is associated with nearly six-fold higher pooled
test-accuracy variance ($F$-test $p = 0.003$) at statistically
indistinguishable mean accuracy. Three diagnostic analyses support
a saturation-mediated account: clamp activation rates exceed 60\% at
early layers, gradient norms at layer~0 are $4.0\times$ lower under
clamping, and halving the starting margin reduces the variance ratio
from $5.90\times$ to $2.98\times$.

This effect does not generalize uniformly. Replication on CIFAR-100,
SVHN, and Fashion-MNIST reveals inverted variance ratios in all three
cases. Cross-dataset analysis identifies two moderating factors:
positive-pair density (which determines CAR) and task difficulty
(which limits trajectory separation at high accuracy). An SVHN
difficulty sweep confirms this interaction on a single dataset, with
the variance ratio moving from $0.25\times$ at 97\% accuracy to
$16.73\times$ at 25\% accuracy.

The practical upshot is conditional: in the regime exemplified by
CIFAR-10 (moderate accuracy, many same-class pairs per batch),
switching to the gradient-neutral subtraction reference cuts training
variance at no observed cost. In high-accuracy or many-class settings,
the effect is absent or inverted, and clamping may not need
modification. We recommend that CFF practitioners measure L0 CAR as
a simple diagnostic and report the chosen margin implementation
explicitly to support reproducibility.

% ============================================================================
% BROADER IMPACT
% ============================================================================
\subsubsection*{Broader Impact Statement}
This work audits seed-to-seed variance arising from a margin
implementation choice in a contrastive loss. It introduces no new
model capabilities, datasets, or deployment-facing systems, and we
identify no specific pathway to negative societal impact.
% ============================================================================
% REFERENCES
% ============================================================================
\bibliographystyle{plainnat}
\bibliography{references}

% ============================================================================
% APPENDIX
% ============================================================================
\appendix

\section{Per-Seed Results}
\label{app:per_seed_c10}

Tables~\ref{tab:c10_per_seed_std} and~\ref{tab:c10_per_seed_low}
report individual test accuracies for all CIFAR-10 runs. Seeds are
numbered S1--S7 consistently within each condition; the same seed index
across conditions corresponds to the same random seed value. The
\texttt{subtract\_detach} rows are identical in both tables because
the subtraction baseline is gradient-neutral
(Proposition~\ref{prop:logprob_margin_constant}): the margin schedule
value does not affect the trained model.

\begin{table}[H]
\centering
\caption{\textbf{Per-seed CIFAR-10 results (standard margin
$0.4 \to 0.1$).}}
\label{tab:c10_per_seed_std}
\begin{tabular}{lccccccc}
\toprule
\textbf{Condition}
  & \textbf{S1} & \textbf{S2} & \textbf{S3}
  & \textbf{S4} & \textbf{S5} & \textbf{S6}
  & \textbf{S7} \\
\midrule
\texttt{clamp\_detach}
  & 78.49 & 78.27 & 78.43 & 76.96 & 78.81 & 78.59 & 80.12 \\
\texttt{clamp\_direct}
  & 78.07 & 77.91 & 79.42 & 76.23 & 79.24 & 78.79 & 79.45 \\
\texttt{subtract\_detach}
  & 78.82 & 79.29 & 78.87 & 79.02 & 77.87 & 78.37 & 78.84 \\
\texttt{subtract\_direct}
  & 78.27 & 78.31 & 78.68 & 78.23 & 78.37 & 78.32 & 77.92 \\
\bottomrule
\end{tabular}
\end{table}

\begin{table}[H]
\centering
\caption{\textbf{Per-seed CIFAR-10 results (low margin
$0.2 \to 0.1$).} The \texttt{subtract\_detach} row is
identical to Table~\ref{tab:c10_per_seed_std}; see text.}
\label{tab:c10_per_seed_low}
\begin{tabular}{lccccccc}
\toprule
\textbf{Condition}
  & \textbf{S1} & \textbf{S2} & \textbf{S3}
  & \textbf{S4} & \textbf{S5} & \textbf{S6}
  & \textbf{S7} \\
\midrule
\texttt{clamp\_detach}
  & 78.97 & 78.49 & 79.04 & 78.15 & 78.46 & 78.49 & 77.99 \\
\texttt{clamp\_direct}
  & 80.16 & 77.36 & 78.33 & 77.74 & 76.88 & 77.55 & 78.35 \\
\texttt{subtract\_detach}
  & 78.82 & 79.29 & 78.87 & 79.02 & 77.87 & 78.37 & 78.84 \\
\bottomrule
\end{tabular}
\end{table}

\section{SVHN CAR Profile}
\label{app:svhn_car}

\begin{table}[H]
\centering
\caption{\textbf{CAR comparison: CIFAR-10 vs.\ SVHN (standard margin).}
SVHN's CAR \emph{increases} with depth, opposite to CIFAR-10.}
\label{tab:car_svhn}
\begin{tabular}{cccc}
\toprule
\textbf{Layer} & \textbf{CAR (CIFAR-10)} & \textbf{CAR (SVHN)}
  & \textbf{Difference} \\
\midrule
0 & 60.7\% & 51.0\% & $-9.7$ pp \\
1 & 58.3\% & 65.3\% & $+7.0$ pp \\
2 & 54.0\% & 75.0\% & $+21.0$ pp \\
3 & 49.6\% & 78.3\% & $+28.7$ pp \\
\midrule
\textbf{Mean (L0--L7)} & \textbf{47.5\%} & \textbf{70.6\%}
  & $+23.1$ pp \\
\bottomrule
\end{tabular}
\end{table}

\section{Per-Seed Results: Generalization Datasets}
\label{app:per_seed_generalization}

Tables~\ref{tab:c100_per_seed}--\ref{tab:svhn_sweep_per_seed} report
individual test accuracies for all cross-dataset experiments. Seeds are
numbered S1--S5 consistently within each condition.

\begin{table}[H]
\centering
\caption{\textbf{Per-seed CIFAR-100 results (standard margin
$0.4 \to 0.1$, $n{=}5$ per cell).}}
\label{tab:c100_per_seed}
\begin{tabular}{lccccc}
\toprule
\textbf{Condition}
  & \textbf{S1} & \textbf{S2} & \textbf{S3}
  & \textbf{S4} & \textbf{S5} \\
\midrule
\texttt{clamp\_detach}
  & 51.98 & 52.06 & 52.32 & 51.62 & 52.36 \\
\texttt{clamp\_direct}
  & 50.84 & 51.60 & 51.62 & 52.43 & 51.36 \\
\texttt{subtract\_detach}
  & 50.85 & 52.25 & 52.31 & 50.17 & 50.90 \\
\texttt{subtract\_direct}
  & 50.78 & 52.52 & 51.98 & 50.81 & 51.23 \\
\bottomrule
\end{tabular}
\end{table}

\begin{table}[H]
\centering
\caption{\textbf{Per-seed SVHN results ($n{=}5$ per group,
\texttt{direct} mode only).}}
\label{tab:svhn_per_seed}
\begin{tabular}{lccccc}
\toprule
\textbf{Condition}
  & \textbf{S1} & \textbf{S2} & \textbf{S3}
  & \textbf{S4} & \textbf{S5} \\
\midrule
\texttt{clamp}
  & 97.16 & 96.42 & 97.11 & 96.17 & 96.66 \\
\texttt{subtract}
  & 95.09 & 96.11 & 95.36 & 93.95 & 95.92 \\
\bottomrule
\end{tabular}
\end{table}

\begin{table}[H]
\centering
\caption{\textbf{Per-seed Fashion-MNIST results ($n{=}5$ per group).}}
\label{tab:fmnist_per_seed}
\begin{tabular}{lccccc}
\toprule
\textbf{Condition}
  & \textbf{S1} & \textbf{S2} & \textbf{S3}
  & \textbf{S4} & \textbf{S5} \\
\midrule
\texttt{clamp}
  & 92.39 & 92.69 & 92.63 & 92.55 & 92.57 \\
\texttt{subtract}
  & 92.00 & 91.32 & 92.06 & 92.36 & 91.62 \\
\bottomrule
\end{tabular}
\end{table}

\begin{table}[H]
\centering
\caption{\textbf{Per-seed SVHN difficulty sweep results ($n{=}5$ per
condition).} Augmentation details in
Section~\ref{sec:exp_setup}.}
\label{tab:svhn_sweep_per_seed}
\begin{tabular}{llccccc}
\toprule
\textbf{Difficulty} & \textbf{Condition}
  & \textbf{S1} & \textbf{S2} & \textbf{S3}
  & \textbf{S4} & \textbf{S5} \\
\midrule
\multirow{2}{*}{Medium}
  & \texttt{clamp}    & 28.71 & 21.05 & 36.37 & 23.89 & 23.92 \\
  & \texttt{subtract} & 22.01 & 30.19 & 28.50 & 21.22 & 28.02 \\
\midrule
\multirow{2}{*}{Hard}
  & \texttt{clamp}    & 18.11 & 35.75 & 19.02 & 34.41 & 17.62 \\
  & \texttt{subtract} & 23.67 & 25.93 & 21.98 & 21.65 & 19.97 \\
\bottomrule
\end{tabular}
\end{table}

\section{Normality Tests}
\label{app:normality}

Table~\ref{tab:shapiro} reports Shapiro--Wilk tests for the pooled
groups used in the primary variance comparisons.

\begin{table}[H]
\centering
\caption{\textbf{Shapiro--Wilk normality tests for pooled groups.}}
\label{tab:shapiro}
\begin{tabular}{lccc}
\toprule
\textbf{Group} & \textbf{n} & \textbf{W} & \textbf{p-value} \\
\midrule
Clamp (standard margin)    & 14 & 0.9513 & 0.5816 \\
Subtract (standard margin) & 14 & 0.9516 & 0.5853 \\
Clamp (low margin)         & 14 & 0.9608 & 0.7355 \\
Subtract (low margin)      &  7 & 0.9074 & 0.3780 \\
\bottomrule
\end{tabular}
\end{table}

\section{Reproducibility Details}
\label{app:reproducibility}

\paragraph{Software.}
All experiments use PyTorch 2.6.0 with CUDA 12.4 (system driver
CUDA 12.8) on NVIDIA V100-SXM2-32GB GPUs.
The CFF implementation is based on the codebase of
\citet{aghagolzadeh2025cff} with modifications to support the
subtraction margin variant and diagnostic logging.

\paragraph{Seed values.}
CIFAR-10 experiments use seven random seeds: 1, 2, 3, 4, 5, 6, 7.
Cross-dataset experiments (CIFAR-100, SVHN, Fashion-MNIST) use seeds
1--5. Each seed is passed to \texttt{torch.manual\_seed},
\texttt{numpy.random.seed}, and \texttt{random.seed}, and
\texttt{torch.cuda.manual\_seed\_all} for full determinism.

\paragraph{Bootstrap procedure.}
All bootstrap confidence intervals use 10{,}000 percentile resamples.
For the variance ratio, each resample draws $n$ values with
replacement from each group independently, computes the sample
variance of each resampled group, and takes their ratio. The reported
95\% CI is the 2.5th and 97.5th percentiles of the resulting
distribution.

\paragraph{Diagnostic computation.}
CAR values (Eq.~\eqref{eq:car}) are computed at the final Stage-1
epoch (epoch 600) by evaluating Eq.~\eqref{eq:car} on each
minibatch of the training set and averaging across minibatches, then
averaging across seeds. Gradient norms (Eq.~\eqref{eq:grad_norm})
are computed on a single minibatch at the final epoch and averaged
across seeds.

\end{document}